\def\R{\bbbr}
\newtheorem{defi}{\hspace{1.5em}Definition}
\newtheorem{theorem}{\hspace{1.5em}Theorem}
\renewcommand{\@biblabel}[1]{\quad#1.}
\date{}
\begin{document}

\begin{flushleft}
{\Large
\textbf{Density-Based Region Search with Arbitrary Shape for Object Localization}
}
\\
Ji Zhao$^{1,\ast}$,
Deyu Meng$^{2}$,
Jiayi Ma$^{3}$
\\
\bf{1} 
Samsung Advanced Institute of Technology, Beijing 100027, China
\\
\bf{2} School of Mathematics and Statistics, Xi'an Jiaotong University, Xi'an 710049, China
\\
\bf{3} Electronic Information School, Wuhan University, Wuhan 430072, China
\\
$\ast$ E-mail: zhaoji84@gmail.com
\end{flushleft}

\section*{Abstract}
Region search is widely used for object localization. Typically, the region search methods project the score of a classifier into an image plane, and then search the region with the maximal score. The recently proposed region search methods, such as efficient subwindow search and efficient region search, 
are much more efficient than sliding window search. However, for some classifiers and tasks, the projected scores are nearly all positive, and hence maximizing the score of a region results in localizing nearly the entire images as objects, which is meaningless.

In this paper, we observe that the large scores are mainly concentrated on or around objects. Based on this observation, we propose a method, named level set maximum-weight connected subgraph (LS-MWCS), which localizes objects with arbitrary shapes by searching regions with the densest score rather than the maximal score. The region density can be controlled by a parameter flexibly. And we prove an important property of the proposed LS-MWCS, which guarantees that the region with the densest score can be searched. Moreover, the LS-MWCS can be efficiently optimized by belief propagation. The method is evaluated on the problem of weakly-supervised object localization, and the quantitative results demonstrate the superiorities of our LS-MWCS compared to other state-of-the-art methods.


\section{Introduction}
In object localization, how to train the classifier and localize the objects is a chicken-and-egg problem \cite{Nguyen09}. Intuitively, if we know the locations of the objects, training a good classifier should be easier \cite{uijlings12}; alternatively, if we have an ideal classifier, the higher scores of the classifier should distribute on or surrounding the objects, and localization should be easier. Therefore, it is common to consider these two tasks jointly. 
Given the spatial distribution of the classifier's score,
finding the region that most probably containing object is known as region search problem \cite{lampert08,Vijayanarasimhan11}.  

If we want to localize objects in a weakly-supervised manner, i.e., the label of a training image is in the image level, the dependency between classifier training and object region searching will be even stronger.
The classifiers are typically trained by adopting the standard bag-of-words representation and support vector machines (SVMs), while the bottleneck is the region search. Here, region search aims to find candidate regions according to the spatial distribution of SVM scores.
The popular sliding-window-based search, which exhaustively applies the classifier to rectangles within an image, is a natural way for region search. However, it is not efficient. Many heuristics have been used to speed up the search procedure, but they also introduce the risk of imprecisely localizing the object or even missing it. Many methods with global optimum and efficient solutions have been proposed to instead sliding window, here we name a few as representative ones.

\begin{figure}[!t]
\begin{center}
{\includegraphics[width=0.95\linewidth]{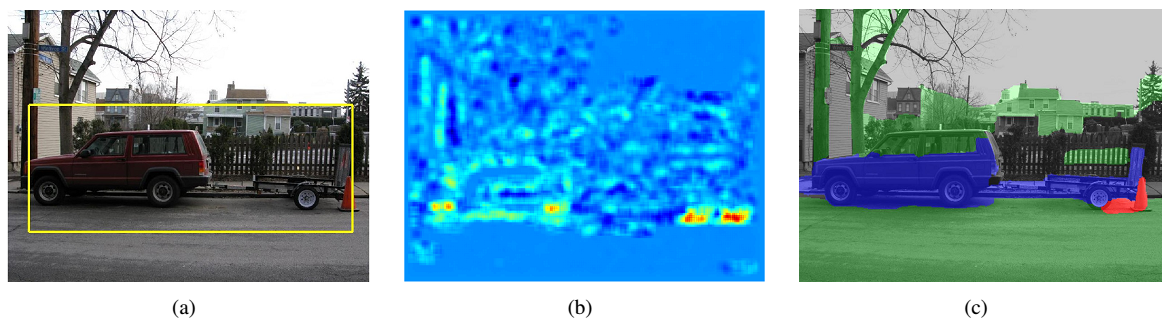}}
\end{center}
\vspace{-0.2in}
\caption{{\bf Detection by maximal score vs. detection by densest score}. (a) ESS \cite{lampert08} that maximizes classifier's score tends to find much larger area than objects. (b) Visualization of score distribution for classifier  ($\chi^2$-SVM) \cite{uijlings12}. (c) Localization of LS-MWCS. The green region denotes the detection with level $0$ (corresponds to ERS \cite{Vijayanarasimhan11}). By increasing the level, the localization shrinks from the green region to blue and red regions. Best viewed in color.}
\label{fig:demo}
\end{figure}

The efficient subwindow search (ESS) is a remedy for sliding-window search \cite{lampert08}.
ESS finds the rectangle that has the maximal score within an image. This turns out to be the 2D maximum subarray problem and the authors proposed an efficient branch-and-bound strategy to solve it. Inspired by the great success of ESS,
many methods have been proposed 
to generalize the rectangle restriction in ESS. For example, the rectangle was generalized  to composite boxes and polygons in \cite{Yeh09}.
The advantages of ESS lie in that it is parameter-free, and there exist efficient algorithms to solve it.
It also has certain limitations.
(i) If all projected scores are positive, ESS will find the entire image to be the best subwindow. (ii) ESS can only find rectangular bounding boxes. For objects of non-rectangle shapes, noises in windows may mislead the localization, causing unsatisfactory results.

In order to find regions with arbitrary shapes, efficient region search (ERS) \cite{Vijayanarasimhan11} was proposed for object localization. ERS converts region search into the maximum-weight connected subgraph (MWCS) problem, and it can be equivalently transformed into an instance of the prize-collecting Steiner tree (PCST) problem \cite{Dittrich08}. PCST is an NP-Complete problem but has efficient approximation solutions \cite{Ideker02,Bailly10}.
The localization ability of ERS has been demonstrated in supervised classification framework.
In \cite{ZhouC12}, an adaptive image grids scheme was proposed to improve ERS by finding a better segmentation.
ERS is successful for supervised localization. However, if all projected scores are positive, it will also find the entire image to be the best region as that in ESS.

Besides ESS and ERS, some region search methods are proposed for interactive segmentation. For example,
twisted window search (TWS) \cite{GuS12a} and shape from point features (SFF) \cite{GuS12b} are proposed for interactive shape localization. SFF use $\alpha$-complex \cite{Edelsbrunner83} to build a filtration of simplicial complexes from a user-provided set of features. The densest complex connected component is viewed as the shape.
SFF is successful for interactive shape localization. However, it relies on the distinct weights manually labeled. Empirically, it is not suitable for object localization because the projected weights from the classifier are usually very noisy compared to human interaction.

The weakly-supervised object localization has higher demand for region search than supervised localization, since the distribution of projected scores is in general very noisy.
The basic idea of our method is based on the following observation: although the spatial distribution of a classifier's scores seems to be random, the densest area of the score distribution is on or near the object, see Fig.~\ref{fig:demo}(b) for example.
Instead of finding the maximal score as done in ESS and ERS, we attempt to search the regions with dense scores. The comparison of localization by densest score and maximal score is demonstrated in  Fig.~\ref{fig:demo}.
Our region search method is called {\bf LS-MWCS (level set maximum-weight connected subgraph)}, which has the following two advantages: (i) it is able to localize regions of arbitrary shape; (ii) it works well when all the weights are positive.

The contribution of this paper is two-fold. First, we propose a region search method that can be used for very noisy weight distributions, such as for weakly-supervised localization.
Second, we validate the effectiveness of the densest score for region search, which in previous literatures is dominantly achieved by maximizing the score.

\subsection{Related Work}
Region search is a key technology in weakly supervised localization (WSL). WSL is usually modeled as multiple instance learning (MIL). In the MIL setting, each image is modeled as
a bag of regions, and each region is an instance. With two classes, the negative
bag only contains negative instances and the positive consists of at least one positive. The
goal of MIL is to label the positive instances in the positive bags. Region search corresponds to finding the region (instance) in the positive image (bag) that triggers the positive label. In the past few years, many MIL
algorithms have been successfully used for weakly supervised learning, such as
MILboost \cite{Galleguillos08} and MI-SVM \cite{Nguyen09}. In \cite{ZhaoJ14}, a region weighting method is proposed for WSL, which is customized for bag-of-words feature representation and non-linear SVM classifiers.
Region search has a close relationship with common pattern discovery from images that share common contents, such as co-segmentation and image feature matching \cite{Matas02,MaJ14,ma2014mixture}.

\section{Density-Based Region Search}
There are two schemes for region search: one is based on feature points \cite{lampert08}, and the other is based on super-pixels from over-segmentation \cite{Vijayanarasimhan11}.
For methods based on feature points, it is difficult to localize regions that are consistent with the object boundaries. By contrast, localization methods based on super-pixels can guarantee the local consistency. In this paper, we focus on region search with super-pixels.

The effectiveness of ERS has been validated for supervised localization, while its performance for weakly-supervised localization has not been reported.
Empirically, we found that ERS has the following problem when it is directly used in weakly-supervised localization. The shape and area of the connected component found by ERS cannot be directly controlled. In weakly supervised localization, since the positive weights are scattered over the entire image, ERS tends to find spindly shapes that cover nearly the whole image.
However, we find that the densest area of the score distribution is on or near the object, see Fig.~\ref{fig:demo}. Based on this observation, in this section we will extend ERS to find regions with densest scores.

Fist, we give some formal definition.
The essence of ERS is the
maximum-weight connected subgraph (MWCS) problem, which is defined as follows.

\begin{defi}[{\bf Maximum-Weight Connected Subgraph (MWCS) Problem \cite{Dittrich08, Vijayanarasimhan11}}]\label{mwcs}
Given a connected undirected, vertex-weighted graph $G = (V, E)$ with weights $\omega: V \rightarrow \R$, find a connected subgraph $T = (V_T \subseteq V, E_T \subseteq E)$ of $G$ to maximize the score $W(T) = \sum_{v \in V_T} \omega (v)$.
\end{defi}

We also define the density of a graph. In this paper, density means the average weight of a graph.

\begin{defi}[{\bf Density of Graph}]\label{density}
Given a connected undirected, vertex-weighted graph $G = (V, E)$ with weights $\omega: V \rightarrow \R$, the density is defined as $\frac{\sum_{v \in V} \omega (v)}{|V|}$, where$|\cdot|$ means the node number of a graph.
\end{defi}

In the region search problem, the set of vertices $V$ consists of the superpixels, and an edge in $E$ connects a pair of superpixels that share a boundary. The weight of a vertex $\omega(V)$ is the superpixel's classifier score. The basic idea of ERS \cite{Vijayanarasimhan11} is that the subgraph with maximal score corresponds to the localized object.

Since the MWCS problem aims to find a subgraph with maximum-weight, the density of the resulting subgraph cannot be controlled. In order to control the density of a subgraph, we define the level set MWCS (LS-MWCS) based on the definition of MWCS. As will be shown in Theorem 1, it has a close relationship with the density of the resulting subgraph.

\begin{defi}[{\bf Level Set Maximum-Weight Connected Subgraph (LS-MWCS) Problem}] \label{lsmwcs}
Given a connected undirected, vertex-weighted graph $G = (V, E)$ with weights $\omega: V \rightarrow \R$, the MWCS with level $\alpha$ aims to find a connected subgraph $T = (V_T \subseteq V, E_T \subseteq E)$ of $G$, that maximizes the score $W(T; \alpha) = \sum_{v \in V_T} (\omega (v) - \alpha)$.
\end{defi}

Based on the definition of LS-MWCS problem, we find the relationship between level $\alpha$ and the density of resulting subgraph. This property turns out to be very useful for density-based region search.

\begin{theorem}\label{theorem1}
Denote the MWCS of a graph with level $\alpha_1$ and $\alpha_2$ as $V^1_T$ and $V^2_T$ respectively. Suppose $0 \le \alpha_1 < \alpha_2$, then
(i) the node number satisfies $|V^1_T| \ge |V^2_T|$;
(ii) the weight summation satisfies $\sum_{v \in V^1_T} \omega(v) \ge \sum_{v \in V^2_T} \omega(v)$; 
(iii) if $|V^2_T| > 0$, then graph density satisfies $\frac{\sum_{v \in V^1_T} \omega(v)}{|V^1_T|} \le \frac{\sum_{v \in V^2_T} \omega(v)}{|V^2_T|}$.
\end{theorem}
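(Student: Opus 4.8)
The plan is to extract everything from two \emph{cross-comparison} inequalities produced by the two optimality conditions. Write $n_i = |V^i_T|$ and $w_i = \sum_{v \in V^i_T} \omega(v)$, so that the LS-MWCS objective of Definition~\ref{lsmwcs} reads $W(V^i_T;\alpha) = w_i - \alpha\, n_i$. Since $V^1_T$ maximizes $W(\cdot\,;\alpha_1)$ and $V^2_T$ is itself a feasible connected subgraph, optimality gives $w_1 - \alpha_1 n_1 \ge w_2 - \alpha_1 n_2$ (call it (A)); symmetrically, optimality of $V^2_T$ for level $\alpha_2$ tested against the competitor $V^1_T$ gives $w_2 - \alpha_2 n_2 \ge w_1 - \alpha_2 n_1$ (call it (B)). I expect (A) and (B) to be the whole engine of the argument.

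For part (i), I would simply add (A) and (B). The weight sums $w_1, w_2$ cancel, leaving $(\alpha_2 - \alpha_1)(n_1 - n_2) \ge 0$. Since $\alpha_2 - \alpha_1 > 0$ by hypothesis, dividing yields $n_1 \ge n_2$, i.e. $|V^1_T| \ge |V^2_T|$. For part (ii), I would rearrange (A) alone into $w_1 - w_2 \ge \alpha_1 (n_1 - n_2)$. By part (i) the factor $n_1 - n_2 \ge 0$, and $\alpha_1 \ge 0$ by hypothesis, so the right-hand side is nonnegative; hence $w_1 \ge w_2$, which is the claimed weight inequality. Note this is precisely where the hypothesis $\alpha_1 \ge 0$ (rather than merely $\alpha_1 < \alpha_2$) is needed.

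Part (iii) is the delicate one. Assuming $n_2 > 0$, part (i) also gives $n_1 > 0$, so both densities are defined. A short manipulation shows that $\frac{w_1}{n_1} \le \frac{w_2}{n_2}$ is equivalent, when $n_1 > n_2$, to the \emph{marginal} bound $\frac{w_1 - w_2}{n_1 - n_2} \le \frac{w_2}{n_2}$ (multiply out and divide by the positive quantity $n_2(n_1-n_2)$); the case $n_1 = n_2$ collapses to equality because (A) and (B) then force $w_1 = w_2$. From (B) the marginal ratio satisfies $\frac{w_1 - w_2}{n_1 - n_2} \le \alpha_2$, so it suffices to prove that the density of the level-$\alpha_2$ solution dominates its own level, namely $\frac{w_2}{n_2} \ge \alpha_2$.

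This last inequality is the crux and the main obstacle. It holds because the empty subgraph is an admissible competitor with objective $0$, so optimality of $V^2_T$ forces $W(V^2_T;\alpha_2) = w_2 - \alpha_2 n_2 \ge 0$. I would explicitly flag that this step is exactly where the empty-set convention — mirrored in the standing hypothesis $|V^2_T| > 0$ of part (iii) — is essential: without allowing the empty graph as a competitor the bound $\frac{w_2}{n_2} \ge \alpha_2$ can fail, and one would instead have to route through the degenerate equality cases. Chaining $\frac{w_1 - w_2}{n_1 - n_2} \le \alpha_2 \le \frac{w_2}{n_2}$ then closes the proof of part (iii).
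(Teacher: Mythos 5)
Your proposal is correct, and parts (i) and (ii) coincide exactly with the paper's proof: the same two cross-optimality inequalities (A) and (B), added together for (i), and (A) rearranged with $\alpha_1 \ge 0$ for (ii). Part (iii) is where you diverge. The paper takes inequality (B), $w_2 - \alpha_2 n_2 \ge w_1 - \alpha_2 n_1$, and asserts that one may divide the left side by $n_2$ and the right side by $n_1$ because $n_1 \ge n_2 > 0$; that arithmetic step is not valid for arbitrary reals (e.g.\ $-2 \ge -3$ with $n_2=1$, $n_1=3$ gives $-2 \not\ge -1$) and secretly requires the left side $W(V^2_T;\alpha_2) = w_2 - \alpha_2 n_2$ to be nonnegative, which in turn comes from comparing against the empty subgraph. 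Your route — reducing the density comparison to the marginal bound $\frac{w_1-w_2}{n_1-n_2} \le \frac{w_2}{n_2}$ when $n_1 > n_2$, sandwiching it as $\frac{w_1-w_2}{n_1-n_2} \le \alpha_2 \le \frac{w_2}{n_2}$, and handling $n_1 = n_2$ separately — makes that hidden ingredient explicit and is therefore the more rigorous of the two arguments; it also isolates the conceptually useful fact that the level-$\alpha_2$ optimum has density at least $\alpha_2$. The paper's version is shorter but only goes through once one observes, as you do, that the empty graph is an admissible competitor (consistent with the theorem's own hypothesis $|V^2_T|>0$, which presupposes the empty solution can occur).
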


\begin{proof} (i) Since $V^1_T$ and $V^2_T$ are the MWCS with level $\alpha_1$ and $\alpha_2$, by Definition~\ref{lsmwcs} we have
\begin{align}
\sum_{v \in V^1_T} (\omega(v)-\alpha_1) - \sum_{v \in V^2_T} (\omega(v)-\alpha_1) \ge 0; \label{equ:mvcs}\\
\sum_{v \in V^2_T} (\omega(v)-\alpha_2) - \sum_{v \in V^1_T} (\omega(v)-\alpha_2) \ge 0. \label{equ:mvcs2}
\end{align}
By adding the above two inequalities, we get $(\alpha_2 - \alpha_1) (|V^1_T| - |V^2_T|) \ge 0$. Since $\alpha_2 > \alpha_1$, the inequality $|V^1_T| \ge |V^2_T|$ holds.

(ii) Reformulate inequality~\eqref{equ:mvcs} as $\sum_{v \in V^1_T} \omega(v) - \sum_{v \in V^2_T} \omega(v) \ge \alpha_1 (|V^1_T| - |V^2_T|)$. Since we have proved that $|V^1_T| \ge |V^2_T|$, it is obvious that $\sum_{v \in V^1_T} \omega(v) \ge \sum_{v \in V^2_T} \omega(v)$ holds.

(iii) According to inequality~\eqref{equ:mvcs2}, we have
\begin{align}
\sum_{v \in V^2_T} \omega(v) - \alpha_2 |V^2_T| \ge \sum_{v \in V^1_T} \omega(v) - \alpha_2 |V^1_T|.
\end{align}
We have proved that $|V^1_T| \ge |V^2_T|$ and we suppose $V^2_T >0$, so the inequality still holds when we divide the left expression with $|V^2_T|$ and divide the right expression with $|V^1_T|$.
\end{proof}

\begin{figure}[!t]
\begin{center}
{\includegraphics[width=0.85\linewidth]{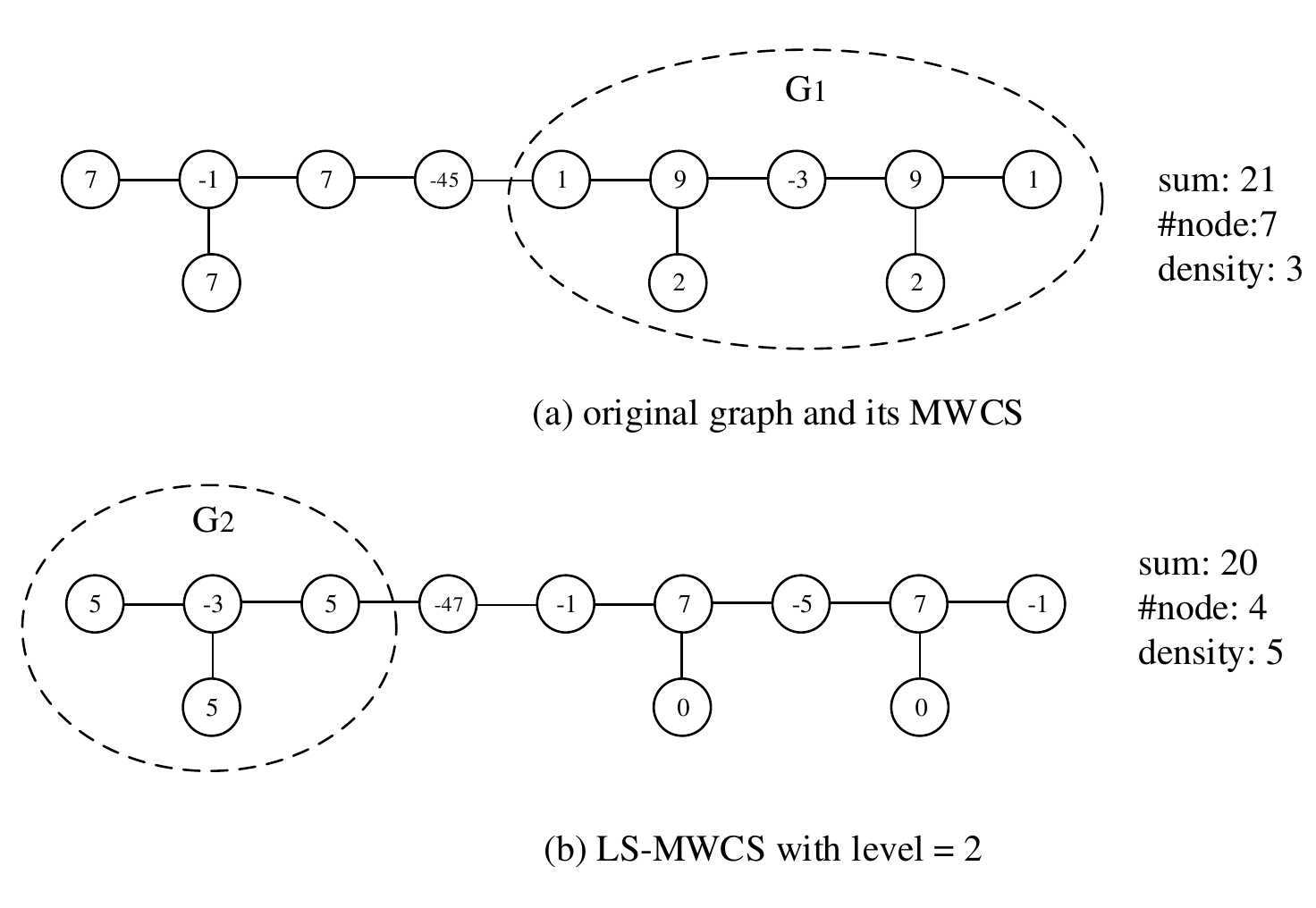}}
\end{center}
\vspace{-0.2in}
\caption{\bf An example of LS-MWCS.}
\label{fig:illumi}
\end{figure}

According to Theorem~\ref{theorem1}, when we increase the level $\alpha$, the node number and weight summation of the resulting subgraph is monotonically decreasing and the density of the subgraph is monotonically increasing.
Fig.~\ref{fig:illumi} gives an example. Fig.~\ref{fig:illumi}(a) is a graph with $12$ vertices and $11$ edges, the weights are labeled in corresponding nodes. According to Definition~\ref{mwcs}, it is easy to verify that its MWCS is subgraph $G_1$, where the node number, weight summation and density are $7$, $21$ and $3$ respectively.
If the weights of all the nodes minus $2$, we get a new graph as shown in Fig.~\ref{fig:illumi}(b). The MWCS of this new graph is subgraph $G_2$ . According to Definition~\ref{lsmwcs}, the LS-MWCS with level $2$ is constructed as following: the node and edge are the same as the subgraph $G_2$ and the corresponding node weights are derived from the original graph. So the node number, weight summation and density of LS-MWCS are $4$, $20$ and $5$ respectively.

Theorem~\ref{theorem1} provides a theoretical background for finding dense subgraph.
An interesting discovery in Theorem~\ref{theorem1} is that the weight summation of the generated subgraph is monotonically decreasing with the level $\alpha$.
This is useful in practice. For example, if we want to find the dense subgraph with predefined weight summation, we can use the binary search to find the proper level $\alpha$.

When the level $\alpha$ in LS-MWCS is determined, this problem can be easily converted to the standard MWCS problem. The MWCS problem is NP-complete. Fortunately, it can be transformed into an instance of the prize-collecting Steiner tree
(PCST) problem, as shown in \cite{Dittrich08}, which
can be efficiently solved in practice for our problem setting ($10^1 \sim 10^2$ nodes, $10^2 \sim 10^3$ edges).
In this paper we adopt an efficient belief propagation scheme to solve this problem\footnote{We use the code provided by the authors which is available from http://areeweb.polito.it/ricerca/cmp/code.} \cite{Bailly10}.

\subsection{Parameter Setting for LS-MWCS}

There is only one parameter, the level $\alpha$, for the problem of LS-MWCS. The level controls the density of the generated subgraph. By increasing $\alpha$, we can obtain the subgraph which has denser weights and vice versa.
For LS-MWCS based region search in the paper, this parameter is very useful when the projected classifier scores are all positive, such as Hough transform vote \cite{Maji09} and matrix completion multi-label classifiers \cite{CabralDCB14}.

Generally, the selection of level $\alpha$ is task-dependent. Here we provide three strategies for the selection of $\alpha$. First, the level $\alpha$ can be determined by a predefined node number, weight summation or graph density.
Second, we can find an optimal level that has best performance for a specific task that depends on level $\alpha$. Take region search for weakly-supervised localization as an example, 
the optimal level $\alpha$ can be set as the number producing the highest image classification accuracy by cross-validation. Third, we can find $\alpha$ that corresponds to stable subgraphs. Here the notation of stable is similar to that in the definition of maximally stable extremal regions (MSER) \cite{Matas02}. A connect subgraph is stable if its weight summation does not change too much when the level $\alpha$ is slightly changed.
The efficient solution of stable subgraph is based on the fact that the weight summation is a monotonically decreasing function with level $\alpha$, as proved in Theorem~1.

\section{Experimental Results}
In this section, we present the experiments of our density-based region search, including experiments on synthetic data and image datasets.

\subsection{Results on Synthesis Data}
Fig.~\ref{fig:mwcs} demonstrates an example of localizing regions using LS-MWCS.
The image is $19 \times 19$ regular grids. The nodes of the graph are the $361$ grids; the adjacency of the graph is determined by $4$-connected neighborhood in the image plane; the weight of each node is the blueness of each block.
The weights of the nodes are 2D Gaussian distribution with noise.
It means that all the nodes have positive weights. The solution of MWCS will find all the nodes as subgraph, as shown in Fig.~\ref{fig:mwcs}(a).
By increasing the level $\alpha$ in LS-MWCS, the optimal subgraph shrinks to denser subgraphs, see Fig.~\ref{fig:mwcs}(b) - (e).

\begin{figure*}[!t]
\begin{center}
{\includegraphics[width=0.98\linewidth]{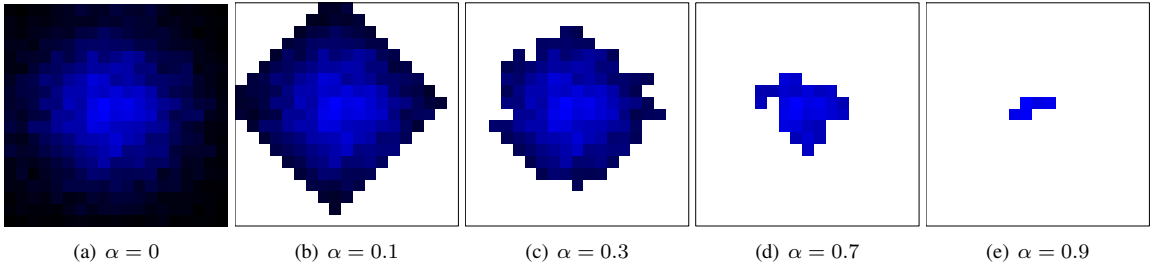}}
\end{center}
\caption{{\bf Localization by MWCS with different level $\alpha$.} The image is $19 \times 19$ regular grids. The nodes of the graph are the grids; the adjacency of the graph is determined by $4$-connected neighborhood in image plane; the weight of each node is the blueness of each block.}
\label{fig:mwcs}
\end{figure*}

\subsection{Object Localization on Image Dataset}
Next, we demonstrate the effectiveness of our method for weakly-supervised object localization. Similar to \cite{Nguyen09}, we use a scheme of iterated optimization for classifier construction and region search.

The dataset used in this experiment is Pittsburgh Car dataset \cite{Nguyen09}, which contains $400$ images, including $200$ positive samples and $200$ negative samples. There is only one car in each positive sample.
Half of the positive and negative samples are used as training data, and the rest are used for testing.

The bag-of-words implementation is provided by \textsc{VLFeat} toolbox \cite{vedaldi10}. It extracts dense SIFT feature uniformly \cite{lowe04}, and generates the visual words by vector quantization.
In all experiments, we extract $128$ dimensional SIFT descriptors over a grid with a step of $5$ pixels on fixed scale ($16 \times 16$ pixels).
The dictionary of $1000$ visual words is obtained by clustering $100,000$ samples via K-means clustering.
Linear SVM is adopted as classifier.
The SVM parameter and the level $\alpha$ in LS-MWCS are chosen via cross-validation. The optimal $\alpha$ is the one that maximizes the image classification accuracy.

Fig.~\ref{fig:RsltComp} demonstrates the qualitative performance comparison of $6$ methods on Pittsburgh Car dataset. We consider methods that only use the scores of features.
The 1st row in Fig.~\ref{fig:RsltComp} is the visualization of feature weights\footnote{We blur the weight using a Gaussian filter and then use the visualization code of O. Woodford available at http://www.robots.ox.ac.uk/~ojw/software.htm.}.
The 2nd to 5th rows are results of ESS \cite{lampert08}, TWS \cite{GuS12a}, SFF \cite{GuS12b} and ERS \cite{Vijayanarasimhan11}, respectively.
The 6th row is the results of our LS-MWCS. Clearly, our results are more visually pleasurable.

\begin{figure*}[!t]
\begin{center}
{\includegraphics[width=0.85\linewidth]{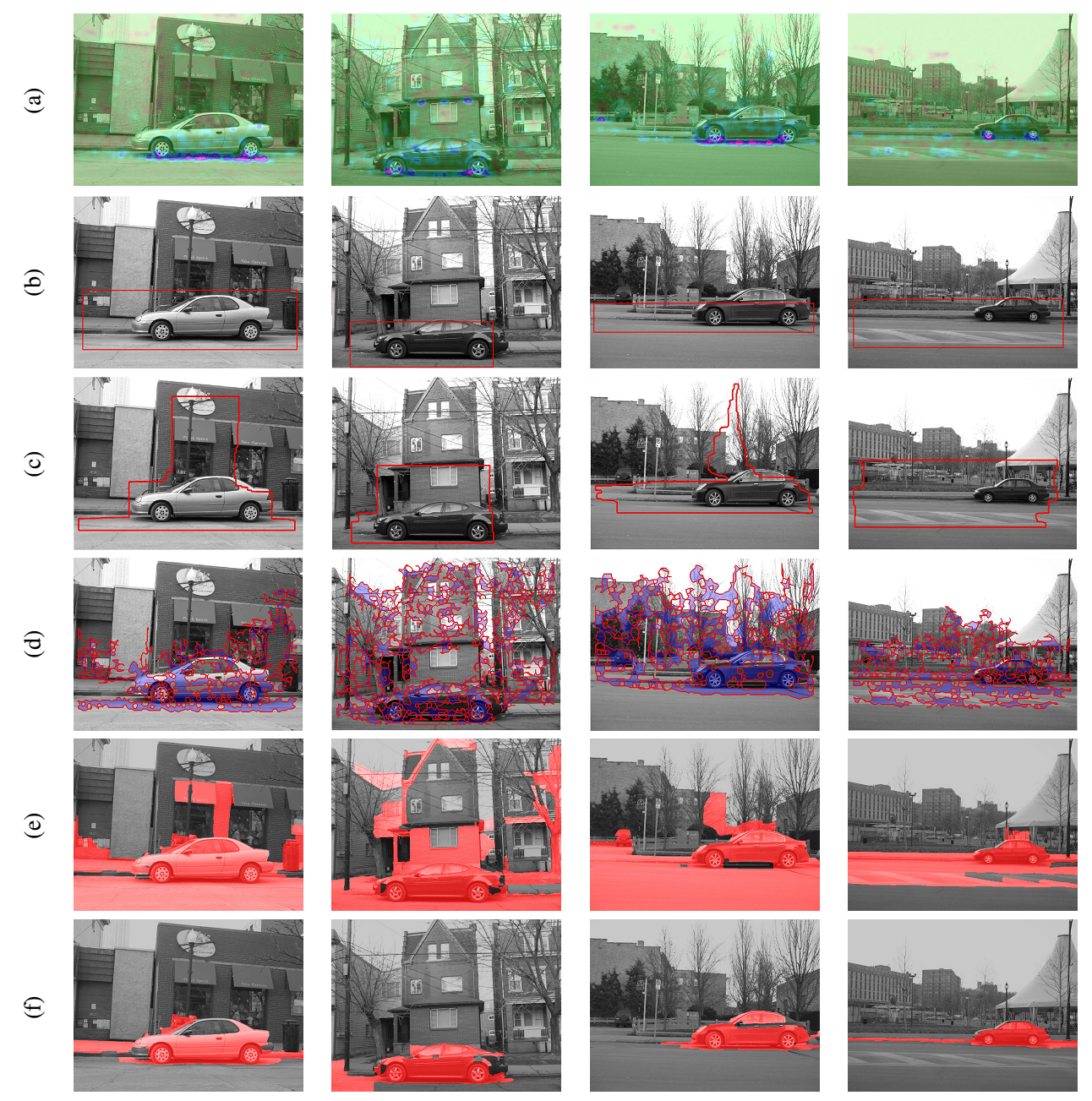}}
\end{center}
\caption{{\bf Performance comparison of region search methods.} (a) Visualization of BoW; (b) ESS; (c) TWS with parameter $\lambda = 10^{-3}$; (d) SFF; (e) ERS; (f) LS-WMCS.}
\label{fig:RsltComp}
\end{figure*}

From Fig.~\ref{fig:RsltComp}, we can see that ESS is biased toward choosing the entire image as the object of interest, and it is not capable of localizing objects accurately. 
In previous literatures, the effectiveness of TWS and SFF are validated in interactive shape localization, where the features are pixel-level and the weight of feature is discretized to only $3$ levels. Although TWS and SFF have potential applications in object localization, our experiments demonstrate that they cannot be used  for such application directly and some modifications are needed.

\begin{figure}[!t]
\begin{center}
  {\includegraphics[width=0.55\linewidth]{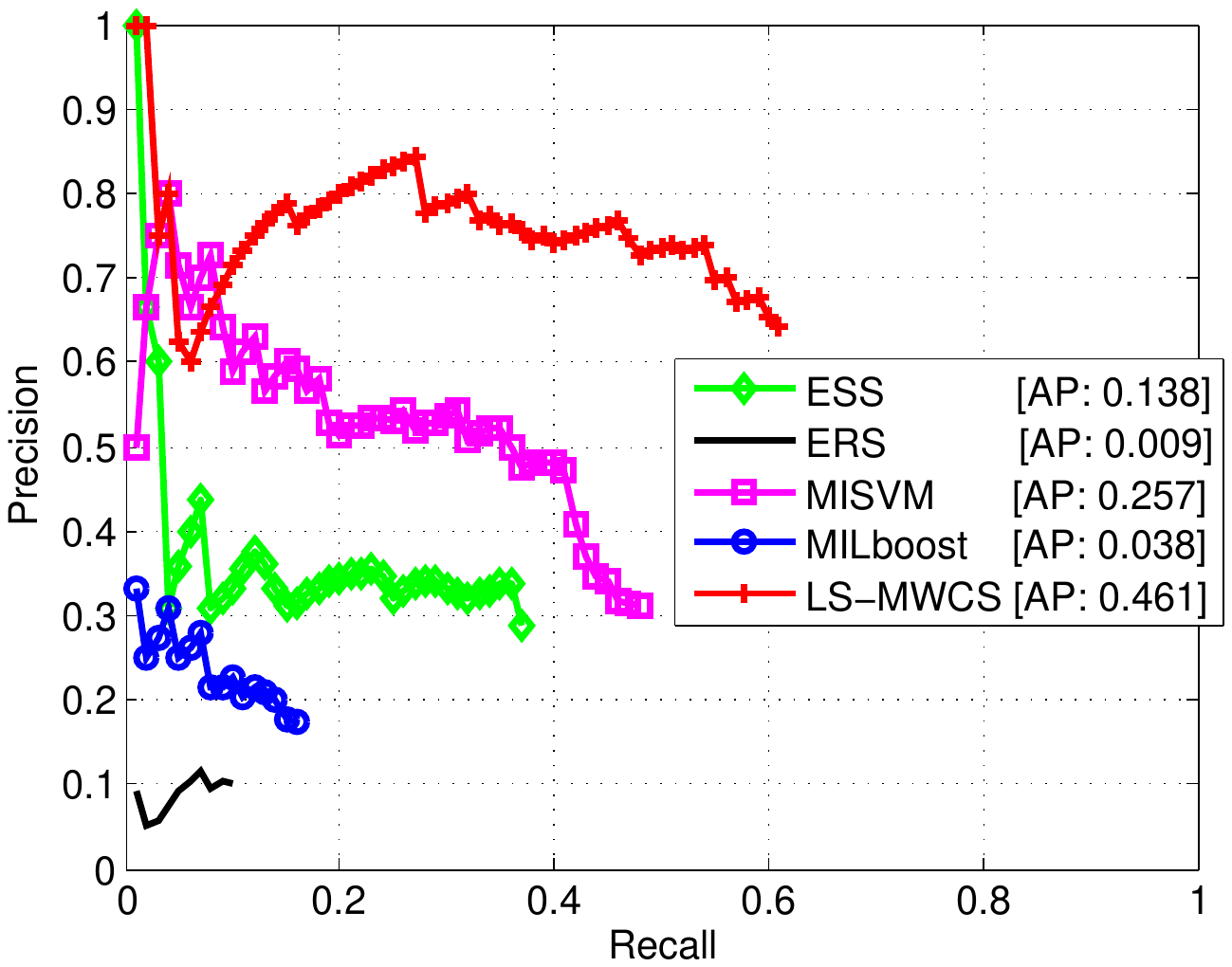}}
\caption{{\bf Localization performance comparison on Pittsburgh Car dataset.}}
\label{fig:localization}
\end{center}
\end{figure}

To provide a quantitative measure for the localization performance, we compared methods using precision-recall curves, as shown in Fig.~\ref{fig:localization}. We used the
area of overlap (AO) measure to evaluate the correctness of localization. For this
criterion, a threshold $t$ should be defined for AO to imply a correct detection.
In this paper, we set the threshold $t$ to $0.4$.
The precision-recall curves are shown in Fig.~\ref{fig:localization}. We compared our density-based region search method with other region search methods and weakly-supervised localization methods, including ESS \cite{lampert08}, ERS \cite{Vijayanarasimhan11}, MILboost \cite{Galleguillos08} and MI-SVM \cite{Nguyen09}. The average precisions (AP) of these methods are given in the legends. We can see that our LS-MWCS method performs significantly better than the baselines.

In our density-based region search, parameter $\alpha$ is the only parameter and is very important.
If we set $\alpha$ as $0$, our density-based region search is the same as ERS. From Fig.~\ref{fig:RsltComp} and \ref{fig:localization}, we can see that our region search with level $\alpha$ performs better than original ERS in our experiment. This demonstrates that the control of score density by adjusting level $\alpha$ is effective for weakly-supervised localization.

\section{Conclusion}
In this paper, we studied the problem of region search for object localization. We proposed a new strategy of finding the regions with densest score rather than the conventional maximal score. Based on this strategy, a density-based region search method was then developed, which is called LS-MWCS (level set maximum-weight connected subgraph). The effectiveness of our method was validated by synthesis data and weakly-supervised localization for images. Our method can be easily extended to videos.

\section*{Acknowledgments}
We are very grateful to Fernando De la Torre at Carnegie Mellon University for helpful discussions.
We thank Minh Hoai at The University of Oxford for providing the Pittsburgh Car dataset.


\begin{thebibliography}{10}
\providecommand{\url}[1]{\texttt{#1}}
\providecommand{\urlprefix}{URL }
\expandafter\ifx\csname urlstyle\endcsname\relax
  \providecommand{\doi}[1]{doi:\discretionary{}{}{}#1}\else
  \providecommand{\doi}{doi:\discretionary{}{}{}\begingroup
  \urlstyle{rm}\Url}\fi
\providecommand{\bibAnnoteFile}[1]{%
  \IfFileExists{#1}{\begin{quotation}\noindent\textsc{Key:} #1\\
  \textsc{Annotation:}\ \input{#1}\end{quotation}}{}}
\providecommand{\bibAnnote}[2]{%
  \begin{quotation}\noindent\textsc{Key:} #1\\
  \textsc{Annotation:}\ #2\end{quotation}}
\providecommand{\eprint}[2][]{\url{#2}}

\bibitem{Nguyen09}
Nguyen MH, Torresani L, De~la~Torre F, Rother C (2009) Weakly supervised
  discriminative localization and classification: {A} joint learning process.
\newblock In: Proceedings of International Conference on Computer Vision.
\bibAnnoteFile{Nguyen09}

\bibitem{uijlings12}
Uijlings JRR, Smeulders AWM, Scha RJH (2012) The visual extent of an object
  suppose we know the object locations.
\newblock International Journal of Computer Vision 96: 46--63.
\bibAnnoteFile{uijlings12}

\bibitem{lampert08}
Lampert CH, Blaschko MB, Hofmann T (2008) Beyond sliding windows: {O}bject
  localization by efficient subwindow search.
\newblock In: Proceedings of IEEE Conference on Computer Vision and Pattern
  Recognition.
\bibAnnoteFile{lampert08}

\bibitem{Vijayanarasimhan11}
Vijayanarasimhan S, Grauman K (2011) Efficient region search for object
  detection.
\newblock In: Proceedings of IEEE Conference on Computer Vision and Pattern
  Recognition.
\bibAnnoteFile{Vijayanarasimhan11}

\bibitem{Yeh09}
Yeh T, Lee JJ, Darrell T (2009) Fast concurrent object localization and
  recognition.
\newblock In: Proceedings of IEEE Conference on Computer Vision and Pattern
  Recognition.
\bibAnnoteFile{Yeh09}

\bibitem{Dittrich08}
Dittrich MT, Klau GW, Rosenwald A, Dandekar T, M\"uller T (2008) Identifying
  functional modules in protein-protein interaction networks: an integrated
  exact approach.
\newblock Bioinformatics 24: i223--i231.
\bibAnnoteFile{Dittrich08}

\bibitem{Ideker02}
Ideker T, Ozier O, Schwikowski B, Siegel AF (2002) Discovering regulatory and
  signalling circuits in molecular interaction networks.
\newblock Bioinformatics 18: S233--S240.
\bibAnnoteFile{Ideker02}

\bibitem{Bailly10}
Bailly-Bechet M, Borgs C, Braunstein A, Chayes J, Dagkessamanskaia A, et~al.
  (2011) Finding undetected protein associations in cell signaling by belief
  propagation.
\newblock Proceedings of the National Academy of Sciences 108: 882--887.
\bibAnnoteFile{Bailly10}

\bibitem{ZhouC12}
Zhou C, Yuan J (2012) Arbitrary-shape object localization using adaptive image
  grids.
\newblock In: Proceedings of Asian Conference of Computer Vision.
\bibAnnoteFile{ZhouC12}

\bibitem{GuS12a}
Gu S, Zheng Y, Tomasi C (2012) Twisted window search for efficient shape
  localization.
\newblock In: Proceedings of IEEE Conference on Computer Vision and Pattern
  Recognition.
\bibAnnoteFile{GuS12a}

\bibitem{GuS12b}
Gu S, Zheng Y, Tomasi C (2012) Shape from point features.
\newblock In: Proceedings of International Conference on Acoustics, Speech and
  Signal Processing.
\bibAnnoteFile{GuS12b}

\bibitem{Edelsbrunner83}
Edelsbrunner H, Kirkpatrick DG, Seidel R (1983) On the shape of a set of points
  in the plane.
\newblock IEEE Transactions on Information Theory 29: 551--559.
\bibAnnoteFile{Edelsbrunner83}

\bibitem{Galleguillos08}
Galleguillos C, Babenko B, Rabinovich A, Belongie S (2008) Weakly supervised
  object localization with stable segmentations.
\newblock In: Proceedings of European Conference on Computer Vision.
\bibAnnoteFile{Galleguillos08}

\bibitem{ZhaoJ14}
Zhao J, Wang L, Cabral R, {De la Torre} F (2014) Feature and region selection
  for visual learning.
\newblock arXiv preprint arXiv:1407{.}5245 .
\bibAnnoteFile{ZhaoJ14}

\bibitem{Matas02}
Matas J, Chum O, Urban M, Pajdla T (2012) Robust wide baseline stereo from
  maximally stable extremal regions.
\newblock In: Proceedings of British Machine Vision Conference.
\bibAnnoteFile{Matas02}

\bibitem{MaJ14}
Ma J, Zhao J, Tian J, Yuille AL, Tu Z (2014) Robust point matching via vector
  field consensus.
\newblock IEEE Transactions Image Processing 23: 1706--1721.
\bibAnnoteFile{MaJ14}

\bibitem{ma2014mixture}
Ma J, Chen J, Ming D, Tian J (2014) A mixture model for robust point matching
  under multi-layer motion.
\newblock PloS ONE 9: e92282.
\bibAnnoteFile{ma2014mixture}

\bibitem{Maji09}
Maji S, Malik J (2009) Object detection using a max-margin {H}ough transform.
\newblock In: Proceedings of IEEE Conference on Computer Vision and Pattern
  Recognition.
\bibAnnoteFile{Maji09}

\bibitem{CabralDCB14}
Cabral RS, {De la Torre} F, Costeira JP, Bernardino A (2014) Matrix completion
  for weakly-supervised multi-label image classification.
\newblock IEEE Transactions Pattern Analysis and Machine Intelligence .
\bibAnnoteFile{CabralDCB14}

\bibitem{vedaldi10}
Vedaldi A, Fulkerson B (2010) {VLFeat} - {A}n open and portable library of
  computer vision algorithms.
\newblock In: Proceedings of ACM Multimedia.
\bibAnnoteFile{vedaldi10}

\bibitem{lowe04}
Lowe D (2004) Distinctive image features from scale-invariant keypoints.
\newblock International Journal of Computer Vision 60: 91--110.
\bibAnnoteFile{lowe04}

\end{thebibliography}




\end{document}